\newcommand{\norm}[1]{\left\lVert#1\right\rVert}
\newcommand{\rank}{r}
\DeclareMathOperator{\ttrank}{TT-rank}
\DeclareMathOperator{\ttround}{TT-round}
\renewcommand{\vec}[1]{\boldsymbol{#1}}
\newcommand{\tens}[1]{\boldsymbol{\mathcal{#1}}}
\newcommand{\tensel}[1]{\mathcal{#1}}
\newtheorem{theorem}{Theorem}
\newtheorem*{theorem*}{Theorem}
\newtheorem{lemma}{Lemma}
\newcommand{\executeiffilenewer}[3]{%
  \ifnum\pdfstrcmp{\pdffilemoddate{#1}}%
  {\pdffilemoddate{#2}} > 0 {\immediate\write18{#3}}\fi}
\newcommand{%
  \executeiffilenewer{.svg}{.pdf}%
  {inkscape -z -D --file=.svg %
   --export-pdf=.pdf --export-latex}%
  \input{.pdf_tex}%
}[1]{%
  \executeiffilenewer{#1.svg}{#1.pdf}%
  {inkscape -z -D --file=#1.svg %
   --export-pdf=#1.pdf --export-latex}%
  \input{#1.pdf_tex}%
}
\title{Exponential Machines}
\author{Alexander Novikov$^{2,3}$\\
\texttt{novikov@bayesgroup.ru}\\
\And
Mikhail Trofimov$^4$\\
\texttt{mikhail.trofimov@phystech.edu}\\
\And
Ivan Oseledets$^{1,3}$\\
\texttt{i.oseledets@skoltech.ru}\\~\\
$^1$Skolkovo Institute of Science and Technology, Russian Federation\\
$^2$National Research University Higher School of Economics, Russian Federation\\
$^3$Institute of Numerical Mathematics, Russian Federation\\
$^4$Moscow Institute of Physics and Technology, Russian Federation
}
\begin{document}

\maketitle

\begin{abstract}
Modeling interactions between features improves the performance of machine learning solutions in many domains (e.g. recommender systems or sentiment analysis).
In this paper, we introduce Exponential Machines (ExM), a predictor that models all interactions of every order.
The key idea is to represent an exponentially large tensor of parameters in a factorized format called Tensor Train (TT).
The Tensor Train format regularizes the model and lets you control the number of underlying parameters.
To train the model, we develop a stochastic Riemannian optimization procedure, which allows us to fit tensors with $2^{160}$ entries.
We show that the model achieves state-of-the-art performance on synthetic data with high-order interactions and that it works on par with high-order factorization machines on a recommender system dataset MovieLens 100K.
\end{abstract}

\section{Introduction}
Machine learning problems with categorical data require modeling interactions between the features to solve them.
As an example, consider a sentiment analysis problem -- detecting whether a review is positive or negative -- and the following dataset: `I liked it', `I did not like it', `I'm not sure'.
 Judging by the presence of the word `like' or the word `not' alone, it is hard to understand the tone of the review. But the presence of the \emph{pair} of words `not' and `like' strongly indicates a negative opinion.

If the dictionary has $d$ words, modeling pairwise interactions requires~$O(d^2)$ parameters and will probably overfit to the data.
Taking into account all interactions (all pairs, triplets, etc. of words) requires impractical~$2^d$ parameters.

In this paper, we show a scalable way to account for all interactions. Our contributions are:
\begin{itemize}
\item We propose a predictor that models \emph{all} $2^d$ interactions of $d$-dimensional data by representing the exponentially large tensor of parameters in a compact multilinear format -- Tensor Train (TT-format)~(Sec.~\ref{sec:the-model}).
Factorizing the parameters into the TT-format leads to a better generalization, a linear with respect to $d$ number of underlying parameters and inference time (Sec.~\ref{sec:inference}).
The TT-format lets you control the number of underlying parameters through the \emph{TT-rank} -- a generalization of the matrix rank to tensors.
\item We develop a stochastic Riemannian optimization learning algorithm~(Sec.~\ref{sec:riemannian-optimization}).
In our experiments, it outperformed the stochastic gradient descent baseline~(Sec.~\ref{sec:exp-riemannian-optimization}) that is often used for models parametrized by a tensor decomposition~(see related works, Sec.~\ref{sec:related-work}).
\item We show that the linear model (e.g. logistic regression) is a special case of our model with the TT-rank equal $2$~(Sec.~\ref{sec:exp-initialization}).
\item We extend the model to handle interactions between functions of the features, not just between the features themselves~(Sec.~\ref{sec:model-extension}).
\end{itemize}

\section{Linear model}
In this section, we describe a generalization of a class of machine learning algorithms -- the \emph{linear model}.
Let us fix a training dataset of pairs $\{(\vec{x}^{(f)}, y^{(f)})\}_{f=1}^N$, where $\vec{x}^{(f)}$ is a $d$-dimensional feature vector of $f$-th object, and $y^{(f)}$ is the corresponding target variable.
Also fix a loss function $\ell(\widehat{y}, y):\mathbb{R}^2 \to \mathbb{R}$, which takes as input the predicted value $\widehat{y}$ and the ground truth value $y$.
We call a model \emph{linear}, if the prediction of the model depends on the features $\vec{x}$ only via the dot product between the features $\vec{x}$ and the $d$-dimensional vector of parameters $\vec{w}$:
\begin{equation}
\label{eq:linear-model}
\widehat{y}_{\text{linear}}(\vec{x}) = \langle \vec{x}, \vec{w} \rangle + b,
\end{equation}
where $b \in \mathbb{R}$ is the \emph{bias} parameter.

One of the approaches to learn the parameters $\vec{w}$ and $b$ of the model is to minimize the following loss
\begin{equation}
\label{eq:linear-loss}
\sum_{f=1}^N \ell\left(\langle \vec{x}^{(f)}, \vec{w} \rangle + b,\, y^{(f)}\right) + \frac{\lambda}{2} \norm{\vec{w}}^2_2,
\end{equation}
where $\lambda$ is the regularization parameter.
For the linear model we can choose any regularization term instead of $L_2$, but later the choice of the regularization term will become important~(see Sec.~\ref{sec:riemannian-optimization}).

Several machine learning algorithms can be viewed as a special case of the linear model with an appropriate choice of the loss function $\ell(\widehat{y}, y)$: least squares regression (squared loss), Support Vector Machine (hinge loss), and logistic regression (logistic loss).

\section{Our model \label{sec:the-model}}
Before introducing our model equation in the general case, consider a $3$-dimensional example. The equation includes one term per each subset of features (each interaction)
\begin{equation}
\label{eq:polynomial-model-example}
\begin{aligned}
\widehat{y}(\vec{x}) &= \tensel{W}_{000} + \tensel{W}_{100} \,\, x_1 + \tensel{W}_{010} \,\, x_2 + \tensel{W}_{001} x_3 \\
&+ \tensel{W}_{110} \,\, x_1 x_2 + \tensel{W}_{101} \,\, x_1 x_3 + \tensel{W}_{011} \,\, x_2 x_3 \\
&+ \tensel{W}_{111} \,\, x_1 x_2 x_3.
\end{aligned}
\end{equation}
Note that all permutations of features in a term (e.g. $x_1 x_2$ and $x_2 x_1$) correspond to a single term and have exactly one associated weight (e.g. $\tensel{W}_{110}$).

In the general case, we enumerate the subsets of features with a binary vector $(i_1, \ldots, i_d)$, where $i_k = 1$ if the $k$-th feature belongs to the subset. The model equation looks as follows
\begin{equation}
\label{eq:polynomial-model}
\widehat{y}(\vec{x}) = \sum_{i_1=0}^1 \ldots \sum_{i_d=0}^1 \tensel{W}_{i_1 \ldots i_d} \prod_{k=1}^d x_k^{i_k}.
\end{equation}
Here we assume that $0^0 = 1$.
The model is parametrized by a $d$-dimensional tensor $\tens{W}$, which consists of $2^d$ elements.

The model equation~\eqref{eq:polynomial-model} is linear with respect to the weight tensor $\tens{W}$.
To emphasize this fact and simplify the notation we rewrite the model equation~\eqref{eq:polynomial-model} as a tensor dot product $\widehat{y}(\vec{x}) = \langle \tens{X}, \tens{W} \rangle$, where the tensor $\tens{X}$ is defined as follows
\begin{equation}
\label{eq:X-definition}
\tensel{X}_{i_1 \ldots i_d} = \prod_{k=1}^d x_k^{i_k}.
\end{equation}
Note that there is no need in a separate bias term, since it is already included in the model as the weight tensor element $\tensel{W}_{0 \ldots 0}$ (see the model equation example~\eqref{eq:polynomial-model-example}).

The key idea of our method is to compactly represent the exponentially large tensor of parameters $\tens{W}$ in the Tensor Train format~\citep{oseledets2011ttMain}.

\section{Tensor Train}
A $d$-dimensional tensor $\tens{A}$ is said to be represented in the Tensor Train (TT) format~\citep{oseledets2011ttMain}, if each of its elements can be computed as the following product of $d-2$ matrices and $2$ vectors
\begin{equation}
  \label{eq:TT-format}
\tensel{A}_{i_1 \ldots i_d} = G_1[i_1] \ldots G_d[i_d],
\end{equation}
where for any $k = 2, \ldots, d-1$ and for any value of $i_k$, $G_k[i_k]$ is an $\rank \times \rank$ matrix, $G_1[i_1]$ is a $1 \times \rank$ vector and $G_d[i_d]$ is an $\rank \times 1$ vector (see Fig.~\ref{fig:TT}).
We refer to the collection of matrices $G_k$ corresponding to the same dimension $k$ (technically, a $3$-dimensional array) as the $k$-th \emph{TT-core}, where $k = 1,\ldots,d$.
The size $\rank$ of the slices $G_k[i_k]$ controls the trade-off between the representational power of the TT-format and computational efficiency of working with the tensor. We call $\rank$ the \emph{TT-rank} of the tensor $\tens{A}$.

An attractive property of the TT-format is the ability to perform algebraic operations on tensors without materializing them, i.e. by working with the TT-cores instead of the tensors themselves.
The TT-format supports computing the norm of a tensor and the dot product between tensors; element-wise sum and element-wise product of two tensors (the result is a tensor in the TT-format with increased TT-rank), and some other operations~\citep{oseledets2011ttMain}.

\begin{figure}[t]
    \captionsetup{singlelinecheck=on}
    \centering
	\resizebox{0.4\textwidth}{!}{
    \def\svgwidth{7cm}
    \normalsize
  \executeiffilenewer{figures/TT.svg}{figures/TT.pdf}%
  {inkscape -z -D --file=figures/TT.svg %
   --export-pdf=figures/TT.pdf --export-latex}%
  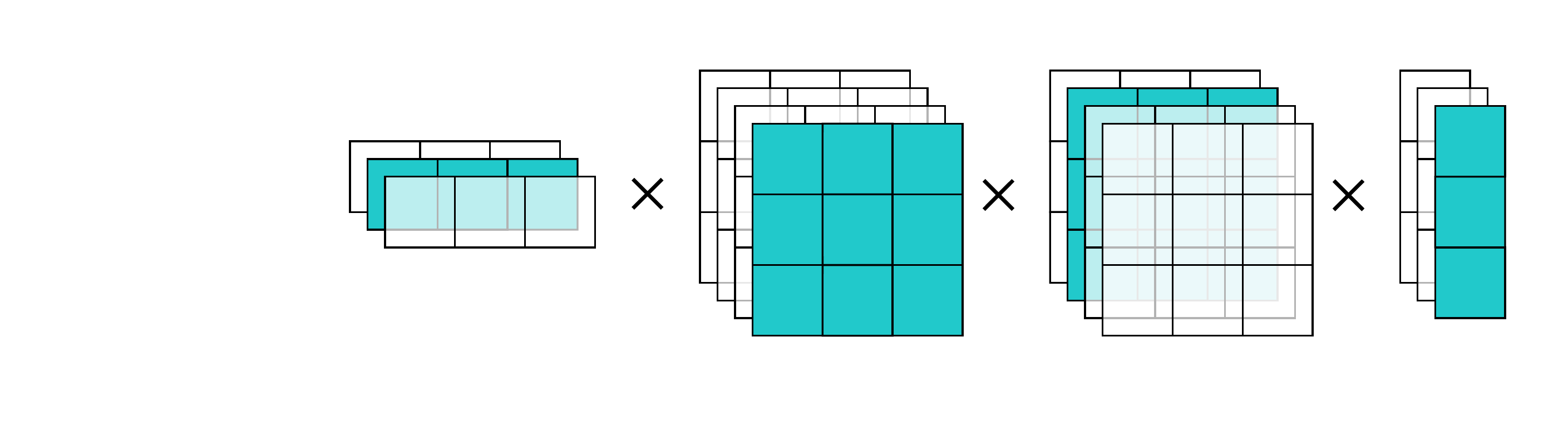%

    }
    \caption{An illustration of the TT-format for a $3 \times 4 \times 4 \times 3$ tensor $\tens{A}$  with the TT-rank equal $3$. \label{fig:TT}}
\end{figure}

\section{Inference \label{sec:inference}}
In this section, we return to the model proposed in Sec.~\ref{sec:the-model} and show how to compute the model equation~\eqref{eq:polynomial-model} in linear time.
To avoid the exponential complexity, we represent the weight tensor~$\tens{W}$ and the data tensor~$\tens{X}$~\eqref{eq:X-definition} in the TT-format.
The TT-ranks of these tensors determine the efficiency of the scheme.
During the learning, we initialize and optimize the tensor $\tens{W}$ in the TT-format and explicitly control its TT-rank.
The TT-rank of the tensor $\tens{X}$ always equals 1. Indeed, the following TT-cores give the exact representation of the tensor~$\tens{X}$
\begin{equation*}
G_k[i_k] = x_k^{i_k} \in \mathbb{R}^{1 \times 1}, ~~ k=1, \ldots, d.
\end{equation*}
The $k$-th core $G_k[i_k]$ is a $1 \times 1$ matrix for any value of $i_k \in \{0, 1\}$, hence the TT-rank of the tensor $\tens{X}$ equals~$1$.

Now that we have TT-representations of tensors $\tens{W}$ and $\tens{X}$, we can compute the model response $\widehat{y}(\vec{x}) = \langle \tens{X}, \tens{W} \rangle$ in linear time with respect to the number of features $d$.
\begin{theorem}
\label{thm:inference-time}
The model response $\widehat{y}(\vec{x})$ can be computed in $O(\rank^2 d)$ FLOPS, where $\rank$ is the TT-rank of the weight tensor~$\tens{W}$.
\end{theorem}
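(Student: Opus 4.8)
The plan is to exploit the fact that the model response is a dot product between two tensors in TT-format, one of which—namely $\tens{X}$, with TT-rank $1$—collapses the entire $2^d$-term sum into a product of $d$ small contractions.

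First I would substitute the TT-representation $\tensel{W}_{i_1 \ldots i_d} = G_1[i_1] \ldots G_d[i_d]$ and the product form $\tensel{X}_{i_1 \ldots i_d} = \prod_{k=1}^d x_k^{i_k}$ into the dot product, obtaining
\begin{equation*}
\widehat{y}(\vec{x}) = \langle \tens{X}, \tens{W} \rangle = \sum_{i_1=0}^1 \ldots \sum_{i_d=0}^1 \left(\prod_{k=1}^d x_k^{i_k}\right) G_1[i_1] \ldots G_d[i_d].
\end{equation*}
The crucial observation is that the summand factorizes across dimensions: the $k$-th factor $x_k^{i_k} G_k[i_k]$ depends only on the single index $i_k$. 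By distributivity of matrix multiplication over addition, I can push each sum inside the product and reorganize the expression as a chain of $d$ factors
\begin{equation*}
\widehat{y}(\vec{x}) = A_1 A_2 \cdots A_d, \qquad A_k = \sum_{i_k=0}^1 x_k^{i_k} G_k[i_k] = G_k[0] + x_k\, G_k[1],
\end{equation*}
where $A_1$ is a $1 \times \rank$ vector, each interior $A_k$ is an $\rank \times \rank$ matrix, and $A_d$ is an $\rank \times 1$ vector. Recognizing this factorization is the heart of the argument and the only step where a genuine idea is needed; everything afterward is bookkeeping.

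Finally I would bound the cost. Forming each contracted core $A_k$ takes $O(\rank^2)$ operations (a scalar-by-matrix multiplication followed by a matrix addition). Evaluating the chain $A_1 A_2 \cdots A_d$ left to right—maintaining a running row vector and multiplying it by the next $\rank \times \rank$ matrix—costs $O(\rank^2)$ per step over $d$ steps. The main subtlety to flag here is the evaluation order: performing matrix–vector products rather than matrix–matrix products keeps each step at $O(\rank^2)$ instead of $O(\rank^3)$, and of course no entry of $\tens{W}$ or $\tens{X}$ is ever materialized. Summing over the $d$ cores yields the claimed $O(\rank^2 d)$ total, which completes the proof.
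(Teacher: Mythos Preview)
Your proposal is correct and follows essentially the same approach as the paper: substitute the TT-representation, factor the sum into the product $A_1 A_2 \cdots A_d$ with $A_k = G_k[0] + x_k G_k[1]$, and evaluate the chain by successive vector--matrix multiplications for $O(\rank^2 d)$ total cost. Your write-up is arguably more explicit about why the evaluation order matters, but the argument is identical.
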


We refer the reader to Appendix~\ref{sec:app-inference-time-proof} where we propose an inference algorithm with $O(r^2 d)$ complexity and thus prove Theorem~\ref{thm:inference-time}.

The TT-rank of the weight tensor $\tens{W}$ is a hyper-parameter of our method and it controls the efficiency vs. flexibility trade-off.
A small TT-rank regularizes the model and yields fast learning and inference but restricts the set of possible tensors $\tens{W}$.
A sufficiently large TT-rank allows any value of the tensor $\tens{W}$ and effectively leaves us with the full polynomial model without any advantages of the TT-format.

\section{Learning \label{sec:learning}}
Learning the parameters of the proposed model corresponds to minimizing the loss under the TT-rank constraint:
\begin{equation}
\label{eq:TT-loss-minimization}
\begin{aligned}
& \underset{\tens{W}}{\text{minimize}}
& & L(\tens{W}), \\
& \text{subject to}
& & \ttrank(\tens{W}) = r_0,
\end{aligned}
\end{equation}
where the loss is defined as follows
\begin{equation}
\label{eq:TT-loss}
L(\tens{W}) = \sum_{f=1}^N \ell\left(\langle \tens{X}^{(f)}, \tens{W} \rangle,\, y^{(f)}\right) + \frac{\lambda}{2} \norm{\tens{W}}^2_F,~~~
\norm{\tens{W}}^2_F = \sum_{i_1=0}^1 \ldots \sum_{i_d=0}^1 \tensel{W}^2_{i_1 \ldots i_d}.
\end{equation}

We consider two approaches to solve problem~\eqref{eq:TT-loss-minimization}.
In a baseline approach, we optimize the objective $L(\tens{W})$ with the stochastic gradient descent applied to the underlying parameters of the TT-format of the tensor $\tens{W}$.

An alternative to the baseline is to perform gradient descent with respect to the tensor $\tens{W}$, that is subtract the  gradient from the current estimate of $\tens{W}$ on each iteration.
The TT-format indeed allows to subtract tensors, but this operation increases the TT-rank on each iteration, making this approach impractical.

To improve upon the baseline and avoid the TT-rank growth, we exploit the geometry of the set of tensors that satisfy the TT-rank constraint~\eqref{eq:TT-loss-minimization} to build a Riemannian optimization procedure (Sec.~\ref{sec:riemannian-optimization}).
We experimentally show the advantage of this approach over the baseline in Sec.~\ref{sec:exp-riemannian-optimization}.

\subsection{Riemannian optimization \label{sec:riemannian-optimization}}
The set of all $d$-dimensional tensors with fixed TT-rank $\rank$
\begin{equation*}
\mathcal{M}_r = \{\tens{W} \in \mathbb{R}^{2 \times  \ldots \times 2}\!:\, \ttrank(\tens{W})=\rank\}
\end{equation*}
forms a Riemannian manifold~\citep{holtz2012manifolds}.
This observation allows us to use Riemannian optimization to solve problem~\eqref{eq:TT-loss-minimization}.
Riemannian gradient descent consists of the following steps which are repeated until convergence (see Fig.~\ref{fig:riemannian-illustration} for an illustration):
\begin{enumerate}
\item Project the gradient $\frac{\partial L}{\partial \tens{W}}$ on the tangent space of $\mathcal{M}_r$ taken at the point $\tens{W}$. We denote the tangent space as $T_{\tens{W}} \mathcal{M}_r$ and the projection as $\tens{G} = P_{T_{\tens{W}} \mathcal{M}_r}(\frac{\partial L}{\partial \tens{W}})$.
\item Follow along $\tens{G}$ with some step~$\alpha$ (this operation increases the TT-rank).
\item Retract the new point $\tens{W} - \alpha \tens{G}$ back to the manifold $\mathcal{M}_r$, that is decrease its TT-rank to $\rank$.
\end{enumerate}
We now describe how to implement each of the steps outlined above.

\cite{lubich2015time} proposed an algorithm to project a TT-tensor $\tens{Z}$ on the tangent space of $\mathcal{M}_r$ at a point $\tens{W}$ which consists of two steps: preprocess $\tens{W}$ in $O(d \rank^3)$ and project $\tens{Z}$ in $O(d \rank^2 \ttrank(\tens{Z})^2)$.
\cite{lubich2015time} also showed that the TT-rank of the projection is bounded by a constant that is independent of the TT-rank of the tensor $\tens{Z}$:
\begin{equation*}
   \ttrank(P_{T_{\tens{W}} \mathcal{M}_r}(\tens{Z})) \leq 2 \ttrank(\tens{W}) = 2 \rank.
\end{equation*}

Let us consider the gradient of the loss function~\eqref{eq:TT-loss}
\begin{equation}
\label{eq:TT-gradient}
\frac{\partial L}{\partial \tens{W}} = \sum_{f=1}^N \frac{\partial \ell}{\partial \widehat{y}} \tens{X}^{(f)} + \lambda \tens{W}.
\end{equation}

Using the fact that $P_{T_{\tens{W}} \mathcal{M}_r}(\tens{W}) = \tens{W}$ and that the projection is a linear operator we get
\begin{equation}
\label{eq:riemannian-gradient}
P_{T_{\tens{W}} \mathcal{M}_r}\left ( \frac{\partial L}{\partial \tens{W}} \right) = \sum_{f=1}^N \frac{\partial \ell}{\partial \widehat{y}} P_{T_{\tens{W}} \mathcal{M}_r}(\tens{X}^{(f)}) + \lambda \tens{W}.
\end{equation}
Since the resulting expression is a weighted sum of projections of individual data tensors $\tens{X}^{(f)}$, we can project them  in parallel. Since the TT-rank of each of them equals 1 (see Sec.~\ref{sec:inference}), all $N$ projections cost $O(d \rank^2 (\rank + N))$ in total.
The TT-rank of the projected gradient is less or equal to $2 \rank$ regardless of the dataset size~$N$.

Note that here we used the particular choice of the regularization term. For terms other than $L_2$ (e.g. $L_1$), the gradient may have arbitrary large TT-rank.

As a retraction -- a way to return back to the manifold $\mathcal{M}_r$ -- we use the TT-rounding procedure \citep{oseledets2011ttMain}. For a given tensor~$\tens{W}$ and rank~$\rank$ the TT-rounding procedure returns a tensor $\widehat{\tens{W}} = \ttround(\tens{W},\,\rank)$ such that its TT-rank equals $\rank$ and the Frobenius norm of the residual~$\| \tens{W} - \widehat{\tens{W}}\|_F$ is as small as possible.
The computational complexity of the TT-rounding procedure is $O(dr^3)$.

Since we aim for big datasets, we use a stochastic version of the Riemannian gradient descent: on each iteration we sample a random mini-batch of objects from the dataset, compute the stochastic gradient for this mini-batch, make a step along the projection of the stochastic gradient, and retract back to the manifold (Alg.~\ref{alg:rimeannian-optimization}).

An iteration of the stochastic Riemannian gradient descent consists of inference $O(dr^2M)$, projection $O(dr^2(r + M))$, and retraction $O(dr^3)$, which yields $O(dr^2(r + M))$ total computational complexity.
\begin{figure}[t]
\begin{minipage}{.61\textwidth}
  \centering
  \begin{algorithm}[H]
     \caption{Riemannian optimization}
     \label{alg:rimeannian-optimization}
  \begin{algorithmic}
     \STATE {\bfseries Input:} Dataset $\{(\vec{x}^{(f)}, y^{(f)})\}_{f=1}^N$, desired TT-rank $\rank_0$, number of iterations~$T$, mini-batch size~$M$, learning rate~$\alpha$, regularization strength~$\lambda$
     \STATE {\bfseries Output:} $\tens{W}$ that approximately minimizes \eqref{eq:TT-loss-minimization}
     \STATE Train linear model~\eqref{eq:linear-loss} to get the parameters $\vec{w}$ and $b$
     \STATE Initialize the tensor $\tens{W}_0$ from $\vec{w}$ and $b$ with the TT-rank equal $\rank_0$
     \FOR{$t := 1$ {\bfseries to} $T$}
          \STATE Sample $M$ indices $h_1, \ldots, h_M \sim \mathcal{U}(\{1, \ldots, N\})$
          \STATE $\tens{D}_t := \sum_{j=1}^M \frac{\partial \ell}{\partial \widehat{y}} \tens{X}^{(h_j)}  + \lambda \tens{W}_{t-1}$
          \STATE $\tens{G}_t := P_{T_{\tens{W}_{t-1}} \mathcal{M}_r}\left ( \tens{D}_t \right)$~\eqref{eq:riemannian-gradient}
          \STATE $\tens{W}_{t} := \ttround(\tens{W}_{t-1} - \alpha \tens{G}_t, \,r_0)$
     \ENDFOR
  \end{algorithmic}
  \end{algorithm}
\end{minipage}%
\hfill
\begin{minipage}[t]{.35\textwidth}
  \centering
  \vspace{-2.7cm}
  \begin{figure}[H]
      \centering
      \resizebox{0.85\textwidth}{!}{
      \def\svgwidth{7cm}
      \normalsize
  \executeiffilenewer{figures/manifold.svg}{figures/manifold.pdf}%
  {inkscape -z -D --file=figures/manifold.svg %
   --export-pdf=figures/manifold.pdf --export-latex}%
  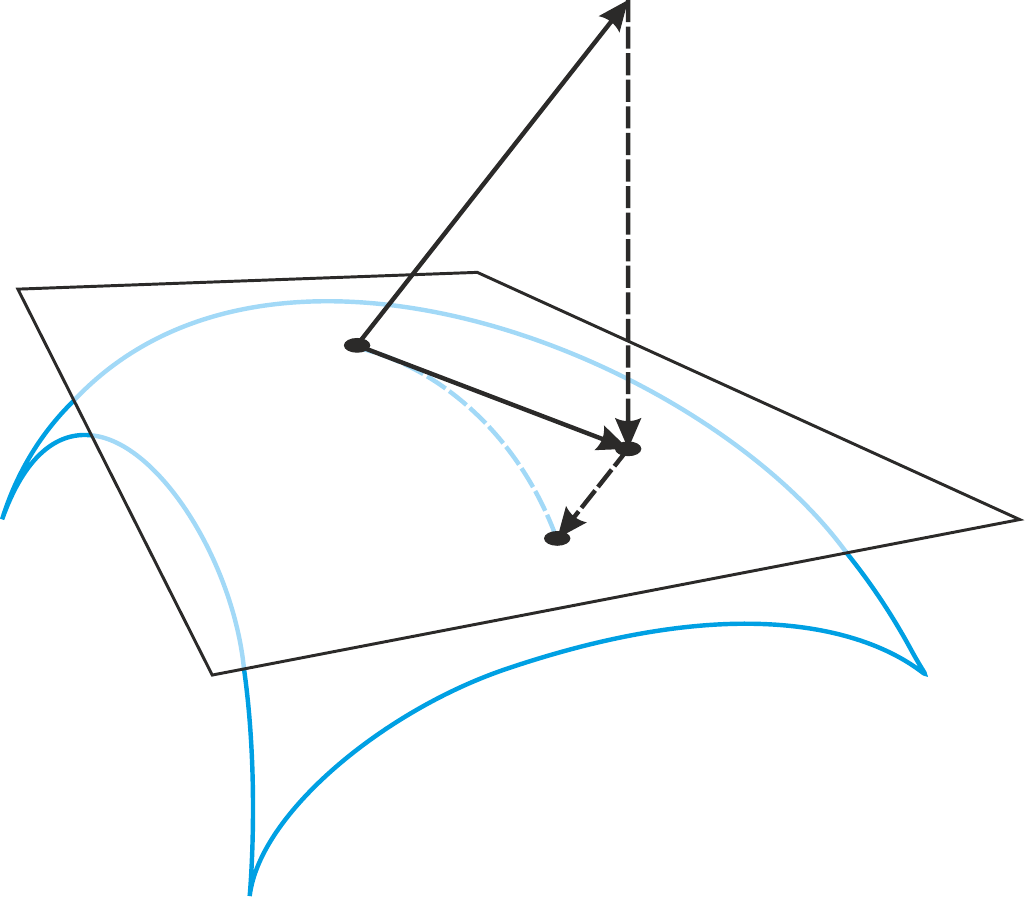%

      }
      \caption{An illustration of one step of the Riemannian gradient descent. The step-size $\alpha$ is assumed to be $1$ for clarity of the figure. \label{fig:riemannian-illustration}}
    \end{figure}
\end{minipage}
\end{figure}

\subsection{Initialization \label{sec:initialization}}
We found that a random initialization for the TT-tensor $\tens{W}$ sometimes freezes the convergence of optimization method (see Sec.~\ref{sec:exp-initialization}).
We propose to initialize the optimization from the solution of the corresponding linear model~\eqref{eq:linear-model}.

The following theorem shows how to initialize the weight tensor $\tens{W}$ from a linear model.
\begin{theorem}
\label{thm:initialization-rank}
For any $d$-dimensional vector $\vec{w}$ and a bias term $b$ there exist a tensor $\tens{W}$ of TT-rank $2$, such that for any $d$-dimensional vector $\vec{x}$ and the corresponding object-tensor $\tens{X}$ the dot products $\langle \vec{x}, \vec{w} \rangle$ and $\langle \tens{X}, \tens{W} \rangle$ coincide.
\end{theorem}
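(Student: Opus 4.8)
The plan is to write the linear prediction as a full polynomial in $\vec{x}$, read off the entries of $\tens{W}$ that this forces, and then exhibit explicit TT-cores of size $2$ that reproduce exactly those entries. First I would match coefficients: since $\langle \tens{X}, \tens{W} \rangle = \sum_{i_1 \ldots i_d} \tensel{W}_{i_1 \ldots i_d} \prod_k x_k^{i_k}$ while $\langle \vec{x}, \vec{w} \rangle + b = b + \sum_{k=1}^d w_k x_k$, equating the two for all $\vec{x}$ forces $\tensel{W}_{0 \ldots 0} = b$, forces $\tensel{W}$ to equal $w_k$ on the multi-index with a single $1$ in position $k$, and forces every entry with two or more ones to vanish. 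The target tensor is thus completely determined, and the task reduces to showing this particular tensor admits a representation with $2 \times 2$ cores.

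Second, I would construct the cores directly, reading the contraction $G_1[i_1] \cdots G_d[i_d]$ as a two-state automaton that scans the bits $i_1, \ldots, i_d$ and tracks whether a single $1$ has already been seen. Concretely, for the interior cores $k = 2, \ldots, d-1$ I would set
\begin{equation*}
G_k[0] = \begin{pmatrix} 1 & 0 \\ 0 & 1 \end{pmatrix}, \qquad G_k[1] = \begin{pmatrix} 0 & w_k \\ 0 & 0 \end{pmatrix},
\end{equation*}
and for the boundary cores
\begin{equation*}
G_1[0] = \begin{pmatrix} 1 & 0 \end{pmatrix}, \quad G_1[1] = \begin{pmatrix} 0 & w_1 \end{pmatrix}, \qquad G_d[0] = \begin{pmatrix} b \\ 1 \end{pmatrix}, \quad G_d[1] = \begin{pmatrix} w_d \\ 0 \end{pmatrix}.
\end{equation*}
The first coordinate of the running row vector represents the ``no $1$ seen yet'' state and the second the ``exactly one $1$ seen, with its weight accumulated'' state; reading a $1$ moves the mass from the first state into the second (picking up $w_k$), whereas a second $1$ is annihilated by the zero column of $G_k[1]$.

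Third, I would verify correctness by propagating the state through four cases: the all-zeros index yields $b$ via $G_d[0]$; a single $1$ at any position $k$ yields $w_k$ (including the boundary positions $1$ and $d$); and any index with at least two ones collapses to the zero vector the moment the second $1$ is read, matching the required zero entries. Since every core slice is at most $2 \times 2$, the constructed tensor satisfies $\ttrank(\tens{W}) \le 2$, which establishes the claim (and the rank is exactly $2$ whenever $\vec{w} \neq \vec{0}$).

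I do not expect a genuine obstacle: the entire content lies in guessing the right cores, and the rank-$2$ automaton interpretation makes that guess natural. The only points needing care are the bookkeeping at the two boundary cores — folding the bias $b$ into $G_d[0]$ and the weight $w_1$ into $G_1[1]$ so the single-nonzero boundary cases come out right — and the check that two-or-more ones are correctly sent to zero, since that last check is precisely what certifies that rank $2$ (rather than something larger) suffices.
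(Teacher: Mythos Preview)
Your proposal is correct and essentially identical to the paper's proof: you write down the very same TT-cores the paper uses, and your ``two-state automaton'' description is exactly the content of the paper's Lemma, which it proves by induction on the partial products $G_1[i_1]\cdots G_p[i_p]$. The only differences are presentational --- you motivate the cores via the state-machine picture and sketch the case analysis, while the paper states the induction formally --- so there is nothing substantive to add.
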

The proof is provided in Appendix~\ref{sec:app-initialization-rank-proof}.

\section{Extending the model \label{sec:model-extension}}
In this section, we extend the proposed model to handle polynomials of any functions of the features.
As an example, consider the logarithms of the features in the $2$-dimensional case:
\begin{equation*}
  \begin{aligned}
    \widehat{y}^{\,\log}(\vec{x}) =& \,\tensel{W}_{00} + \tensel{W}_{01} x_1 +\tensel{W}_{10} x_2 + \tensel{W}_{11} x_1 x_2 + \tensel{W}_{20} \,\, \log(x_1) + \tensel{W}_{02} \,\, \log(x_2)\\
    &+ \tensel{W}_{12} \,\, x_1 \log(x_2)+ \tensel{W}_{21} \,\, x_2 \log(x_1)
    + \tensel{W}_{22} \,\, \log(x_1) \log(x_2).
  \end{aligned}
\end{equation*}
In the general case, to model interactions between $n_g$ functions $g_1, \ldots, g_{n_g}$ of the features we redefine the object-tensor as follows:
\begin{equation*}
\tensel{X}_{i_1 \ldots i_d} = \prod_{k=1}^d c(x_k, i_k),
\end{equation*}
where
\begin{equation*}
c(x_k, i_k) =
\begin{cases}
1, & \text{if } i_k = 0,\\
g_1(x_k), & \text{if } i_k = 1,\\
\ldots\\
g_{n_g}(x_k), & \text{if } i_k = n_g,\\
\end{cases}
\end{equation*}

The weight tensor $\tens{W}$ and the object-tensor $\tens{X}$ are now consist of $(n_g + 1)^d$ elements.
After this change to the object-tensor $\tens{X}$, learning and inference algorithms will stay unchanged compared to the original model~\eqref{eq:polynomial-model}.

\paragraph{Categorical features.} Our basic model handles categorical features $x_k \in \{1, \ldots, K\}$ by converting them into one-hot vectors $x_{k,1}, \ldots, x_{k,K}$. The downside of this approach is that it wastes the model capacity on modeling non-existing interactions between the one-hot vector elements $x_{k,1}, \ldots, x_{k,K}$ which correspond to the same categorical feature. Instead, we propose to use one TT-core per categorical feature and use the model extension technique with the following function
\begin{equation*}
c(x_k, i_k) =
\begin{cases}
1, & \text{if } x_k = i_k \text{ or } i_k = 0,\\
0, & \text{otherwise.}
\end{cases}
\end{equation*}
This allows us to cut the number of parameters per categorical feature from $2K\rank^2$ to $(K + 1)\rank^2$ without losing any representational power.

%
%
%

\section{Experiments}
We release a Python implementation of the proposed algorithm and the code to reproduce the experiments\footnote{\url{https://github.com/Bihaqo/exp-machines}}.
For the operations related to the TT-format, we used the TT-Toolbox\footnote{\url{https://github.com/oseledets/ttpy}}.

\subsection{Datasets \label{sec:exp-datasets}}
The datasets used in the experiments (see details in Appendix~\ref{sec:app-datasets})
\begin{enumerate}
  \item \textbf{UCI~\citep{Lichman2013UCI} Car dataset} is a classification problem with $1728$ objects and $21$ binary features (after one-hot encoding).
  We randomly splitted the data into $1382$ training and $346$ test objects and binarized the labels for simplicity.

  \item \textbf{UCI HIV dataset} is a binary classification problem with $1625$ objects and $160$ features, which we randomly splitted into $1300$ training and $325$ test objects.

  \item \textbf{Synthetic data.} We generated $100\,000$ train and $100\,000$ test objects with $30$ features and set the ground truth target variable to a $6$-degree polynomial of the features.

  \item \textbf{MovieLens 100K} is a recommender system dataset with $943$ users and $1682$ movies~\citep{harper2015movielens}.
  We followed~\cite{blondel2016hofm} in preparing $2703$ one-hot features and in turning the problem into binary classification.
\end{enumerate}

\subsection{Riemannian optimization \label{sec:exp-riemannian-optimization}}
In this experiment, we compared two approaches to training the model: Riemannian optimization~(Sec.~\ref{sec:riemannian-optimization}) vs. the baseline~(Sec.~\ref{sec:learning}).
In this and later experiments we tuned the learning rate of both Riemannian and SGD optimizers with respect to the training loss after 100 iterations by the grid search with logarithmic grid.

On the Car and HIV datasets we turned off the regularization ($\lambda = 0$) and used rank $r = 4$.
We report that on the Car dataset Riemannian optimization (learning rate $\alpha = 40$) converges faster and achieves better final point than the baseline (learning rate $\alpha = 0.03$) both in terms of the training and test losses~(Fig.~\ref{fig:riemannian_vs_plain}a,~\ref{fig:riemannian-vs-plain-val}a).
On the HIV dataset Riemannian optimization (learning rate $\alpha = 800$) converges to the value $10^{-4}$ around $20$ times faster than the baseline~(learning rate $\alpha = 0.001$, see Fig.~\ref{fig:riemannian_vs_plain}b), but the model overfitts to the data~(Fig.~\ref{fig:riemannian-vs-plain-val}b).

The results on the synthetic dataset with high-order interactions confirm the superiority of the Riemannian approach over SGD -- we failed to train the model at all with SGD~(Fig.~\ref{fig:riemannian-vs-plain-synthetic}).

\begin{figure}[t]
  \captionsetup[subfigure]{justification=centering}
  \centering

  \subfloat[Binarized Car dataset]{%
    \includegraphics[clip,width=0.5\textwidth]{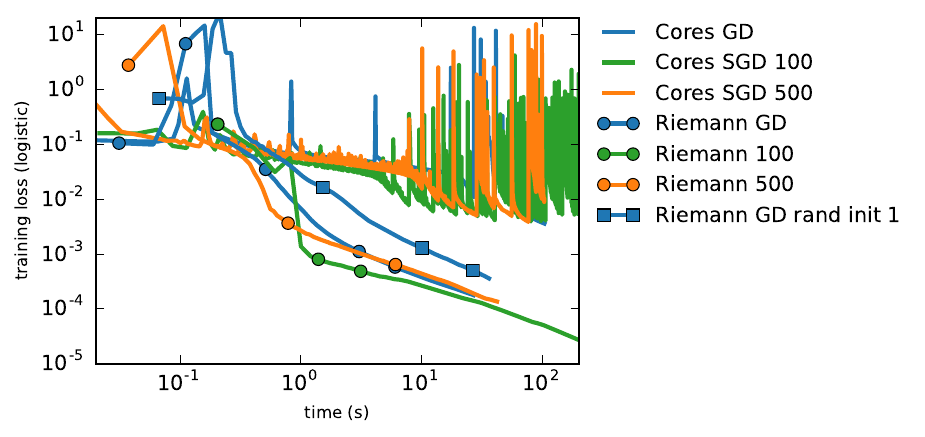}%
  }
  \subfloat[HIV dataset]{%
    \includegraphics[clip,width=0.5\textwidth]{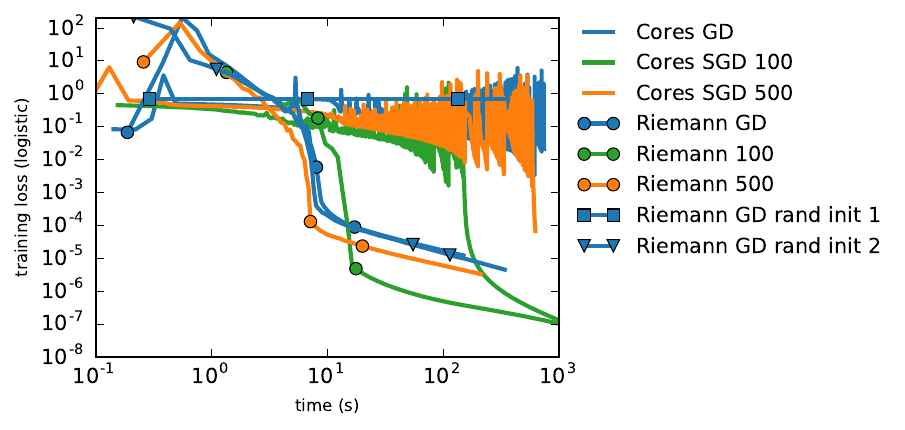}%
  }
  \caption{A comparison between Riemannian optimization and SGD applied to the underlying parameters of the TT-format (the baseline) for the rank-$4$ Exponential Machines.
  Numbers in the legend stand for the batch size.
  The methods marked with `rand init' in the legend (square and triangle markers) were initialized from a random TT-tensor from two different distributions (see Sec.~\ref{sec:exp-initialization}), all other methods were initialized from the solution of ordinary linear logistic regression.
  Type-2 random initialization is ommited from the Car dataset for the clarity of the figure. \label{fig:riemannian_vs_plain}}
\end{figure}

On the MovieLens 100K dataset, we have only used SGD-type algorithms, because using the one-hot feature encoding is much slower than using the categorical version (see Sec.~\ref{sec:model-extension}), and we have yet to implement the support for categorical features for the Riemannian optimizer. On the bright side, prototyping the categorical version of ExM in TensorFlow allowed us to use a GPU accelerator.

\subsection{Initialization \label{sec:exp-initialization}}
In this experiment, we compared random initialization with the initialization from the solution of the corresponding linear problem~(Sec.~\ref{sec:initialization}).
We explored two ways to randomly initialize a TT-tensor: 1) filling its TT-cores with independent Gaussian noise; 2) initializing $\tens{W}$ to represent a linear model with random coefficients (sampled from a standard Gaussian).
We report that on the Car dataset type-1 random initialization slowed the convergence compared to initialization from the linear model solution~(Fig.~\ref{fig:riemannian_vs_plain}a), while on the HIV dataset the convergence was completely frozen~(Fig.~\ref{fig:riemannian_vs_plain}b).

Two possible reasons for this effect are:
a) the vanishing and exploding gradients problem~\citep{bengio1994vanishing} that arises when dealing with a product of a large number of factors ($160$ in the case of the HIV dataset);
b) initializing the model in such a way that high-order terms dominate we may force the gradient-based optimization to focus on high-order terms, while it may be more stable to start with low-order terms instead.
Type-2 initialization (a random linear model) indeed worked on par with the best linear initialization on the Car, HIV, and synthetic datasets~(Fig.~\ref{fig:riemannian_vs_plain}b,~\ref{fig:riemannian-vs-plain-synthetic}).

\subsection{Comparison to other approaches \label{sec:exp-comparison}}
On the synthetic dataset with high-order interactions we compared Exponential Machines (the proposed method) with scikit-learn implementation~\citep{scikit-learn} of logistic regression, random forest, and kernel SVM; FastFM implementation~\citep{bayer2015fastfm} of $2$-nd order Factorization Machines; our implementation of high-order Factorization Machines\footnote{\url{https://github.com/geffy/tffm}}; and a feed-forward neural network implemented in TensorFlow~\citep{tensorflow2015-whitepaper}.
We used $6$-th order FM with the Adam optimizer~\citep{kingma2014adam} for which we had chosen the best rank ($20$) and learning rate ($0.003$) based on the training loss after the first $50$ iterations.
We tried several feed-forward neural networks with ReLU activations and up to $4$ fully-connected layers and $128$ hidden units.
We compared the models based on the Area Under the Curve (AUC) metric since it is applicable to all methods and is robust to unbalanced labels~(Tbl.~\ref{tbl:synthetic-comparison}).


\begin{figure}
\begin{floatrow}
  \capbtabbox{
  \scalebox{0.9}{
  \begin{tabular}{l*{3}{c}}
  \hline\\[-0.3cm]
  Method & Test AUC &
  \parbox{1.4cm}{Training\\ time (s)} &
  \parbox{1.4cm}{Inference\\ time (s)} \\[0.2cm]
  \hline
  Log. reg. & $0.50$ & $0.4$ & $0.0$ \\
  RF & $0.55$ & $21.4$ & $6.5$ \\
  Neural Network & $0.50$ & $47.2$ & $0.1$ \\
  SVM RBF & $0.50$ & $2262.6$ & $5380$ \\
  SVM poly. 2 & $0.50$ & $1152.6$ & $4260$ \\
  SVM poly. 6 & $0.56$ & $4090.9$ & $3774$ \\
  2-nd order FM & $0.50$ & $638.2$ & $0.5$ \\
  6-th order FM & $0.57$ & $549$ & $3$ \\
  6-th order FM & $0.86$ & $6039$ & $3$ \\
  6-th order FM & $\textbf{0.96}$ & $38918$ & $3$ \\
  ExM rank 3 & $0.79$ & $65$ & $0.2$ \\
  ExM rank 8 & $0.85$ & $1831$ & $1.3$ \\
  ExM rank 16 & $\textbf{0.96}$ & $48879$ & $3.8$ \\
  \hline
  \end{tabular}}
  }{
  \caption{A comparison between models on synthetic data with high-order interactions (Sec.~\ref{sec:exp-comparison}).
  We report the inference time on $100000$ test objects in the last column.
  \label{tbl:synthetic-comparison}}
  }
\ffigbox{%
      \hspace{-1.3cm}
      \includegraphics[width=0.38\textwidth]{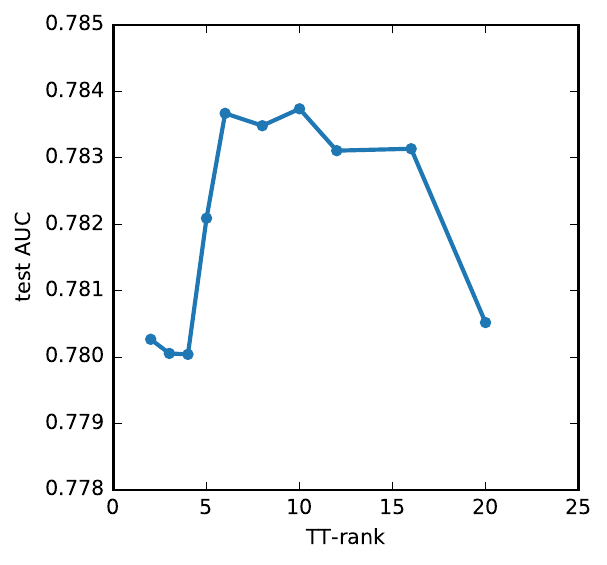}
}{%
      \caption{The influence of the TT-rank on\\
      the test AUC for the MovieLens 100K\\
       dataset. \label{fig:rank-vs-accuracy}}
}
\end{floatrow}
\vspace{-0.4cm}
\end{figure}

On the MovieLens 100K dataset we used the categorical features representation described in Sec.~\ref{sec:model-extension}.
Our model obtained $0.784$ test AUC with the TT-rank equal $10$ in $273$ seconds on a Tesla K40 GPU (the inference time is $0.3$ seconds per $78800$ test objects); our implentation of 3-rd order FM obtained $0.782$; logistic regression obtained $0.782$; and \cite{blondel2016hofm} reported $0.786$ with 3-rd order FM on the same data.

\subsection{TT-rank \label{sec:exp-tt-rank}}
The TT-rank is one of the main hyperparameters of the proposed model.
Two possible strategies can be used to choose it: grid-search or DMRG-like algorithms (see Sec.~\ref{sec:related-work}).
In our experiments we opted for the former and observed that the model is fairly robust to the choice of the TT-rank (see Fig.~\ref{fig:rank-vs-accuracy}), but a too small TT-rank can hurt the accuracy (see Tbl.~\ref{tbl:synthetic-comparison}).

\section{Related work \label{sec:related-work}}
Kernel SVM is a flexible non-linear predictor and, in particular, it can model interactions when used with the polynomial kernel~\citep{boser1992training}.
As a downside, it scales at least quadratically with the dataset size~\citep{bordes2005fast} and overfits on highly sparse data.

With this in mind, \cite{rendle2010factorization}~developed Factorization Machine (FM), a general predictor that models pairwise interactions.
To overcome the problems of polynomial SVM, FM restricts the rank of the weight matrix, which leads to a linear number of parameters and generalizes better on sparse data.
FM running time is linear with respect to the number of nonzero elements in the data, which allows scaling to billions of training entries on sparse problems.

For high-order interactions FM uses CP-format~\citep{caroll70, harshman1970parafac} to represent the tensor of parameters.
The choice of the tensor factorization is the main difference between the high-order FM and Exponential Machines.
The TT-format comes with two advantages over the CP-format: first, the TT-format allows for Riemannian optimization; second, the problem of finding the best TT-rank $\rank$ approximation to a given tensor always has a solution and can be solved in polynomial time.
We found Riemannian optimization superior to the SGD baseline~(Sec.~\ref{sec:learning}) that was used in several other models parametrized by a tensor factorization~\citep{rendle2010factorization, lebedev2014speeding, novikov15tensornet}.
Note that CP-format also allows for Riemannian optimization, but only for 2-order tensors (and thereafter 2-order FM).

A number of works used full-batch or stochastic Riemannian optimization for data processing tasks~\citep{meyer2011regression, tan2014riemannian, xu2016stochastic, zhang2016fast}.
The last work~\citep{zhang2016fast} is especially interesting in the context of our method, since it improves the convergence rate of stochastic Riemannian gradient descent and is directly applicable to our learning procedure.

In a concurrent work,~\cite{stoudenmire2016quantum} proposed a model that is similar to ours but relies on the trigonometric basis $(\cos (\frac{\pi}{2} x), \sin (\frac{\pi}{2} x))$ in contrast to polynomials $(1, x)$ used in Exponential Machines (see Sec.~\ref{sec:model-extension} for an explanation on how to change the basis).
They also proposed a different learning procedure inspired by the DMRG algorithm~\citep{schollwock2011dmrg}, which allows to automatically choose the ranks of the model, but is hard to adapt to the stochastic regime.
One of the possible ways to combine strengths of the DMRG and Riemannian approaches is to do a full DMRG sweep once in a few epochs of the stochastic Riemannian gradient descent to adjust the ranks.

Other relevant works include the model that approximates the decision function with a multidimensional Fourier series whose coefficients lie in the TT-format~\citep{wahls2014learning}; and models that are similar to FM but include squares and other powers of the features: Tensor Machines~\citep{yang2015tensor} and Polynomial Networks~\citep{livni2014computational}.
Tensor Machines also enjoy a theoretical generalization bound.
In another relevant work, \cite{blondel2016polynomial} boosted the efficiency of FM and Polynomial Networks by casting their training as a low-rank tensor estimation problem, thus making it multi-convex and allowing for efficient use of Alternative Least Squares types of algorithms.
Note that Exponential Machines are inherently multi-convex.

\section{Discussion}
We presented a predictor that models all interactions of every order.
To regularize the model and to make the learning and inference feasible, we represented the exponentially large tensor of parameters in the Tensor Train format.
To train the model, we used Riemannian optimization in the stochastic regime and report that it outperforms a popular baseline based on the stochastic gradient descent.
However, the Riemannian learning algorithm does not support sparse data, so for dataset with hundreds of thousands of features we are forced to fall back on the baseline learning method.
We found that training process is sensitive to initialization and proposed an initialization strategy based on the solution of the corresponding linear problem.
The solutions developed in this paper for the stochastic Riemannian optimization may suit other machine learning models parametrized by tensors in the TT-format.

\subsubsection*{Acknowledgments}
The study has been funded by the Russian Academic Excellence Project `5-100'.

\bibliography{paper}
\bibliographystyle{iclr2017_conference}

\appendix
\addcontentsline{toc}{section}{Appendices}

\section{Proof of Theorem~\ref{thm:inference-time} \label{sec:app-inference-time-proof}}
Theorem~\ref{thm:inference-time} states that the inference complexity of the proposed algorithm is~$O(\rank^2 d)$, where $\rank$ is the TT-rank of the weight tensor~$\tens{W}$. In this section, we propose an algorithm that achieve the stated complexity and thus prove the theorem.
\begin{proof}
  Let us rewrite the definition of the model response~\eqref{eq:polynomial-model} assuming that the weight tensor $\tens{W}$ is represented in the TT-format~\eqref{eq:TT-format}
\begin{equation*}
\widehat{y}(\vec{x}) = \sum_{i_1, \ldots, i_d} \tensel{W}_{i_1 \ldots i_d} \, \left ( \prod_{k=1}^d x_k^{i_k} \right )
= \sum_{i_1, \ldots, i_d} G_1[i_1] \ldots G_d[i_d] \left ( \prod_{k=1}^d x_k^{i_k} \right ).
\end{equation*}
Let us group the factors that depend on the variable $i_k$, $k=1, \ldots, d$
\begin{equation*}
\begin{aligned}
\widehat{y}(\vec{x}) &= \sum_{i_1, \ldots, i_d}  x_1^{i_1} G_1[i_1] \ldots x_d^{i_d} G_d[i_d]
= \left ( \sum_{i_1=0}^1 x_1^{i_1} G_1[i_1] \right ) \ldots \left ( \sum_{i_d=0}^1 x_d^{i_d} G_d[i_d] \right )\\
&= \underbrace{A_1}_{1 \times \rank} \underbrace{A_2}_{\rank \times \rank} \ldots \underbrace{A_d}_{\rank \times 1},
\end{aligned}
\end{equation*}
where the matrices $A_k$ for $k=1, \ldots, d$ are defined as follows
\begin{equation*}
A_k = \sum_{i_k=0}^1 x_k^{i_k} G_k[i_k] = G_k[0] + x_k G_k[1].
\end{equation*}
The final value $\widehat{y}(\vec{x})$ can be computed from the matrices~$A_k$ via $d-1$ matrix-by-vector multiplications and $1$ vector-by-vector multiplication, which yields $O(r^2 d)$ complexity.

Note that the proof is constructive and corresponds to the implementation of the inference algorithm.
\end{proof}

\begin{figure}[t]
  \captionsetup[subfigure]{justification=centering}
  \centering
  \subfloat[Binarized Car dataset]{%
    \includegraphics[clip,width=0.5\textwidth]{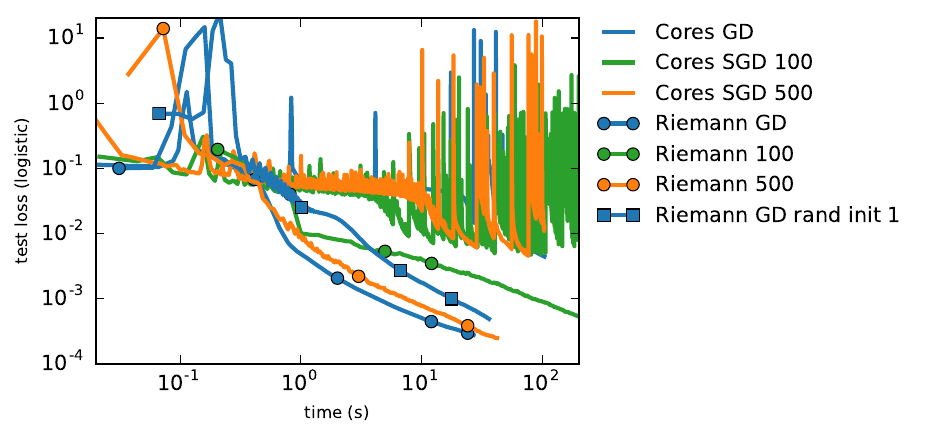}
  }
  \subfloat[HIV dataset]{%
    \includegraphics[clip,width=0.5\textwidth]{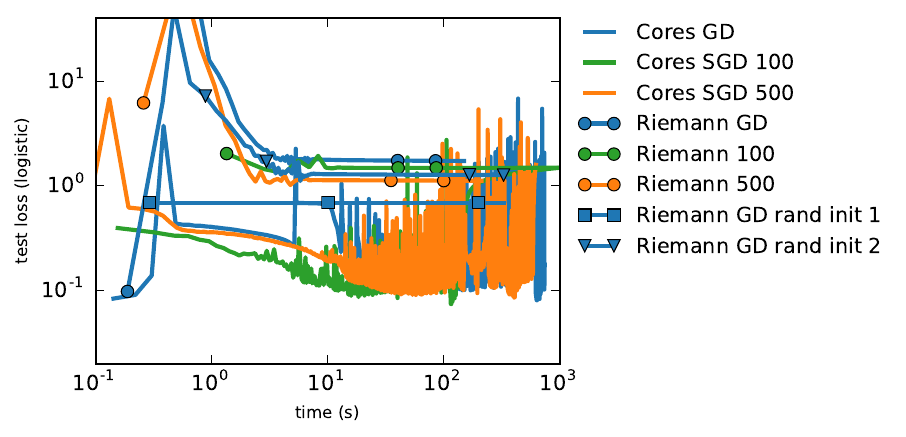}
  }
\caption{A comparison between Riemannian optimization and SGD applied to the underlying parameters of the TT-format (the baseline) for the rank-$4$ Exponential Machines.
Numbers in the legend stand for the batch size.
The methods marked with `rand init' in the legend (square and triangle markers) were initialized from a random TT-tensor from two different distributions, all other methods were initialized from the solution of ordinary linear logistic regression.
See details in Sec.~\ref{sec:exp-riemannian-optimization} and~\ref{sec:exp-initialization} \label{fig:riemannian-vs-plain-val}}
\end{figure}
\section{Proof of Theorem~\ref{thm:initialization-rank} \label{sec:app-initialization-rank-proof}}
Theorem~\ref{thm:initialization-rank} states that it is possible to initialize the weight tensor~$\tens{W}$ of the proposed model from the weights $\vec{w}$ of the linear model.
\begin{theorem*}
For any $d$-dimensional vector $\vec{w}$ and a bias term $b$ there exist a tensor $\tens{W}$ of TT-rank $2$, such that for any $d$-dimensional vector $\vec{x}$ and the corresponding object-tensor $\tens{X}$ the dot products $\langle \vec{x}, \vec{w} \rangle$ and $\langle \tens{X}, \tens{W} \rangle$ coincide.
\end{theorem*}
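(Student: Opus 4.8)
The plan is to reduce the statement to an explicit rank-$2$ construction. Observe first that, by the definition of the object-tensor~\eqref{eq:X-definition}, the tensor dot product $\langle \tens{X}, \tens{W}\rangle = \sum_{i_1 \ldots i_d} \tensel{W}_{i_1 \ldots i_d} \prod_{k=1}^d x_k^{i_k}$ is a multilinear polynomial in $\vec{x}$ whose monomials are indexed by the binary tuples $(i_1, \ldots, i_d)$. Matching this polynomial to the affine predictor $b + \langle \vec{x}, \vec{w}\rangle = b + \sum_{k=1}^d w_k x_k$ therefore forces the entries of $\tens{W}$ to be $\tensel{W}_{0 \ldots 0} = b$; $\tensel{W}_{i_1 \ldots i_d} = w_k$ whenever the tuple has a single one in position $k$; and $\tensel{W}_{i_1 \ldots i_d} = 0$ on every tuple containing two or more ones. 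So it suffices to show that this particular sparse tensor admits a TT-representation with $\rank = 2$.

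Next I would exhibit such a representation directly, guided by a two-state automaton (state $1$ means ``no variable has been selected yet'', state $2$ means ``exactly one variable has been selected''). I would take the boundary cores
\begin{equation*}
G_1[0] = \begin{pmatrix} 1 & 0 \end{pmatrix}, \quad G_1[1] = \begin{pmatrix} 0 & w_1 \end{pmatrix}, \qquad G_d[0] = \begin{pmatrix} b \\ 1 \end{pmatrix}, \quad G_d[1] = \begin{pmatrix} w_d \\ 0 \end{pmatrix},
\end{equation*}
and, for every $k = 2, \ldots, d-1$, the middle cores
\begin{equation*}
G_k[0] = \begin{pmatrix} 1 & 0 \\ 0 & 1 \end{pmatrix}, \qquad G_k[1] = \begin{pmatrix} 0 & w_k \\ 0 & 0 \end{pmatrix}.
\end{equation*}
Here $G_k[0]$ preserves the state, while $G_k[1]$ encodes the only admissible transition $1 \to 2$ with weight $w_k$ and is strictly upper triangular, hence nilpotent.

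I would then verify the three index families. On the all-zero tuple the product telescopes through the identity slices to $\begin{pmatrix} 1 & 0 \end{pmatrix} \begin{pmatrix} b \\ 1 \end{pmatrix} = b$. On a tuple with a single one in position $k$, the row vector $\begin{pmatrix} 1 & 0 \end{pmatrix}$ is routed by $G_k[1]$ into state $2$ and then carried unchanged to the final core, yielding $w_k$ (the boundary cases $k = 1$ and $k = d$ being checked separately against $G_1[1]$ and $G_d[1]$). Finally, on any tuple with at least two ones the product vanishes: once the running row vector has entered state $2$ it is annihilated by the next $G_k[1]$, since the second row of each $G_k[1]$ (and of $G_d[1]$) is zero. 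This last point---that all higher-order monomials vanish---is the crux of the argument, and it rests entirely on the nilpotent, ``no-return'' structure of the slices $G_k[1]$.

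Combining the three cases gives $\langle \tens{X}, \tens{W}\rangle = b + \sum_{k=1}^d w_k x_k = \langle \vec{x}, \vec{w}\rangle + b$ for every $\vec{x}$, so the tensor model reproduces the linear model, with the bias absorbed into $\tensel{W}_{0 \ldots 0}$ as noted after~\eqref{eq:X-definition}. The slices have size $2 \times 2$, so $\ttrank(\tens{W}) \le 2$, and for a nonzero $\vec{w}$ it is exactly $2$, matching the statement. The only subtlety beyond the vanishing argument is the bookkeeping at the two boundary cores, which must be shaped as a row and a column vector and must simultaneously carry the terms $w_1$, $w_d$, and $b$; everything else is routine matrix multiplication.
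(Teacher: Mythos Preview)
Your proposal is correct and follows essentially the same approach as the paper: you give the identical TT-cores and verify the same three cases (all-zero tuple gives $b$, single-one tuple gives $w_k$, two-or-more-ones gives $0$). The only stylistic difference is that the paper packages the partial-product computation as an induction lemma on $G_1[i_1]\cdots G_p[i_p]$, whereas you argue via the two-state automaton interpretation; the content is the same.
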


To prove the theorem, in the rest of this section we show that the tensor $\tens{W}$ from Theorem~\ref{thm:initialization-rank} is representable in the TT-format with the following TT-cores
\begin{equation}
\begin{aligned}
\label{eq:initialization-cores}
G_1[0] &=
\left[
\begin{array}{cc}
1 & 0\\
\end{array}
\right], ~~&
G_1[1] &=
\left[
\begin{array}{cc}
0 & w_1\\
\end{array}
\right], \\
G_d[0] &=
\left[
\begin{array}{c}
b\\
1\\
\end{array}
\right], ~~&
G_d[1] &=
\left[
\begin{array}{c}
w_d\\
0\\
\end{array}
\right], ~~\\[0.2cm]
\forall ~ 2 \leq & ~k \leq d-1 & &\\[-0.05cm]
G_k[0] &=
\left[
\begin{array}{cc}
1 & 0 \\
0 & 1 \\
\end{array}
\right], ~~&
G_k[1] &=
\left[
\begin{array}{cc}
0 & w_k \\
0 & 0 \\
\end{array}
\right],
\end{aligned}
\end{equation}
and thus the TT-rank of the tensor $\tens{W}$ equals $2$.

We start the proof with the following lemma:
\begin{lemma}
  \label{lemma:initialization-rank}
  For the TT-cores~\eqref{eq:initialization-cores} and any $p = 1, \ldots, d-1$ the following invariance holds:
  \begin{equation*}
  G_1[i_1] \ldots G_p[i_p] =
  \begin{cases*}
      \begin{array}{ll}
      \left[
      \begin{array}{cc}
      1 & 0\\
      \end{array}
      \right], & \text{if } \sum_{q=1}^p i_q = 0,\\[0.1cm]
      \left[
      \begin{array}{cc}
      0 & 0\\
      \end{array}
      \right], & \text{if } \sum_{q=1}^p i_q \geq 2,\\[0.1cm]
      \left[
      \begin{array}{cc}
      0 & w_k\\
      \end{array}
      \right], & \text{if } \sum_{q=1}^p i_q = 1,\\
                          &~~~~~~\!\text{and}~i_k = 1.
  \end{array}
  \end{cases*}
  \end{equation*}
\end{lemma}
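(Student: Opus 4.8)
The plan is to prove the lemma by induction on $p$. The three cases in the statement correspond to three distinct ``states'' of the running partial product $G_1[i_1]\ldots G_p[i_p]$, namely the row vectors $[1\ 0]$, $[0\ w_k]$, and $[0\ 0]$, and the whole argument amounts to checking that right-multiplication by the next core moves us between these states exactly as dictated by whether $i_{p+1}$ is $0$ or $1$. For the base case $p=1$, I simply read off the definitions in \eqref{eq:initialization-cores}: if $i_1=0$ then $G_1[0]=[1\ 0]$ and $\sum_{q=1}^1 i_q=0$; if $i_1=1$ then $G_1[1]=[0\ w_1]$, which matches the third case with $k=1$ since $\sum_{q=1}^1 i_q=1$. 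The case $\sum \ge 2$ is vacuous for $p=1$.

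For the inductive step, suppose the claim holds for $p-1$ with $2 \le p \le d-1$, and write $G_1[i_1]\ldots G_p[i_p] = P_{p-1}\, G_p[i_p]$, where $P_{p-1}=G_1[i_1]\ldots G_{p-1}[i_{p-1}]$ is, by the inductive hypothesis, one of the three row vectors above. Since $2 \le p \le d-1$, the core $G_p$ is a middle core, so $G_p[0]=I$ and $G_p[1]=\bigl[\begin{smallmatrix}0 & w_p\\ 0 & 0\end{smallmatrix}\bigr]$. I would split on the value of $i_p$. If $i_p=0$, then $P_p = P_{p-1}I = P_{p-1}$ and the sum $\sum_{q=1}^p i_q$ equals $\sum_{q=1}^{p-1} i_q$, so both value and state are unchanged and the claim transfers verbatim. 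If $i_p=1$, the sum increases by one, and I compute $P_{p-1}G_p[1]$ in each of the three subcases: $[1\ 0]\,G_p[1]=[0\ w_p]$ (so a count of $0$ advances to a count of $1$ with $k=p$); $[0\ w_k]\,G_p[1]=[0\ 0]$ (a count of $1$ advances to a count $\ge 2$); and $[0\ 0]\,G_p[1]=[0\ 0]$ (a count $\ge 2$ stays $\ge 2$). Each of these matches the target state for the new sum, closing the induction.

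The only computation with any content is the middle product $[0\ w_k]\,G_p[1]=[0\ 0]$: this vanishes precisely because the active entry $w_k$ sits in the second coordinate while the second row of the strictly-upper-triangular matrix $G_p[1]$ is zero. This nilpotent structure is the mechanism that enforces ``at most one feature may contribute,'' which is exactly what encodes the linear model inside a rank-$2$ tensor. I do not expect any real obstacle here; the argument is a clean induction once the three-state picture is in place, and the design of the cores in \eqref{eq:initialization-cores} is what makes every transition collapse to one of the three prescribed vectors.
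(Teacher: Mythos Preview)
Your proposal is correct and follows essentially the same approach as the paper: an induction on $p$, with the base case read off from the definition of $G_1$, and the inductive step split on $i_p\in\{0,1\}$ followed by the three-case analysis of $P_{p-1}G_p[1]$. The three-state framing and the remark about the nilpotent structure of $G_p[1]$ are nice expository touches, but the underlying argument coincides with the paper's.
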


\begin{proof}
  We prove the lemma by induction. Indeed, for $p = 1$ the statement of the lemma becomes
  \begin{equation*}
    G_1[i_1] =
    \begin{cases*}
        \begin{array}{ll}
        \left[
        \begin{array}{cc}
        1 & 0\\
        \end{array}
        \right], & \text{if } i_1 = 0,\\[0.1cm]
        \left[
        \begin{array}{cc}
        0 & w_1\\
        \end{array}
        \right], & \text{if } i_1 = 1,
    \end{array}
    \end{cases*}
  \end{equation*}
  which holds by definition of the first TT-core $G_1[i_1]$.

  Now suppose that the statement of Lemma~\ref{lemma:initialization-rank} is true for some $p - 1 \geq 1$.
  If $i_p = 0$, then $G_p[i_p]$ is an identity matrix and $G_1[i_1] \ldots G_p[i_p] = G_1[i_1] \ldots G_{p-1}[i_{p-1}]$. Also, $\sum_{q=1}^p i_q = \sum_{q=1}^{p-1}  i_q$, so the statement of the lemma stays the same.

  If $i_p = 1$, then there are $3$ options:
  \begin{itemize}
    \item If $\sum_{q=1}^{p-1} i_q = 0$, then $\sum_{q=1}^{p} i_q = 1$ and
    \begin{equation*}
      G_1[i_1] \ldots G_p[i_p] \,=\, \left[
      \begin{array}{cc}
      1 & 0\\
      \end{array}
      \right]
      G_p[1] \,=\, \left[
      \begin{array}{cc}
      0 & w_p\\
      \end{array}
      \right].
    \end{equation*}
    \item If $\sum_{q=1}^{p-1} i_q \geq 2$, then $\sum_{q=1}^{p} i_q\geq 2$ and
    \begin{equation*}
      G_1[i_1] \ldots G_p[i_p] \,=\, \left[
      \begin{array}{cc}
      0 & 0\\
      \end{array}
      \right] G_p[1] \,=\,
      \left[
      \begin{array}{cc}
      0 & 0\\
      \end{array}
      \right].
    \end{equation*}
    \item If $\sum_{q=1}^{p-1}i_q = 1$ with $i_k = 1$, then $\sum_{q=1}^{p}i_q \geq 2$ and
    \begin{equation*}
      G_1[i_1] \ldots G_p[i_p] \,=\, \left[
      \begin{array}{cc}
      0 & w_k\\
      \end{array}
      \right]
      G_p[1] \, =\, \left[
      \begin{array}{cc}
      0 & 0\\
      \end{array}
      \right].
    \end{equation*}
  \end{itemize}
  Which is exactly the statement of Lemma~\ref{lemma:initialization-rank}.
\end{proof}

\begin{figure}[t]
  \captionsetup[subfigure]{justification=centering}
  \centering
  \subfloat[Training set]{%
    \includegraphics[clip,width=0.5\textwidth]{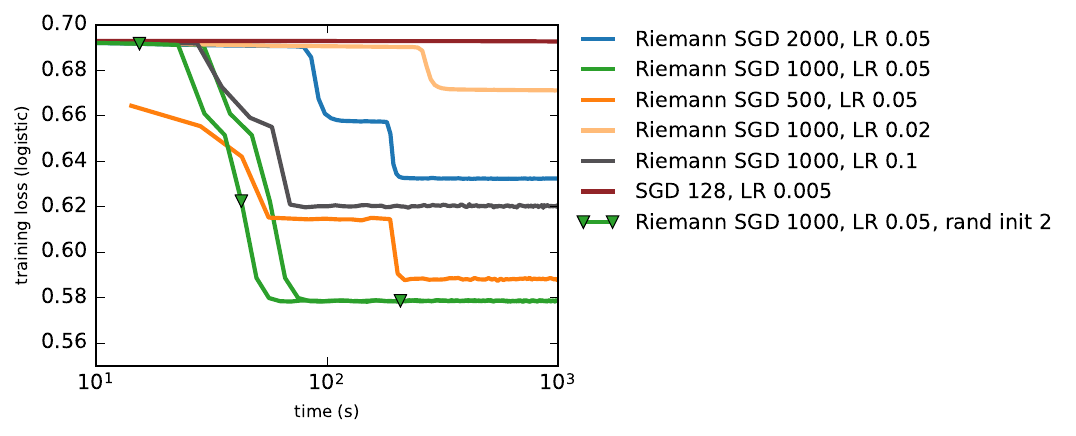}
  }
  \subfloat[Test set]{%
    \includegraphics[clip,width=0.5\textwidth]{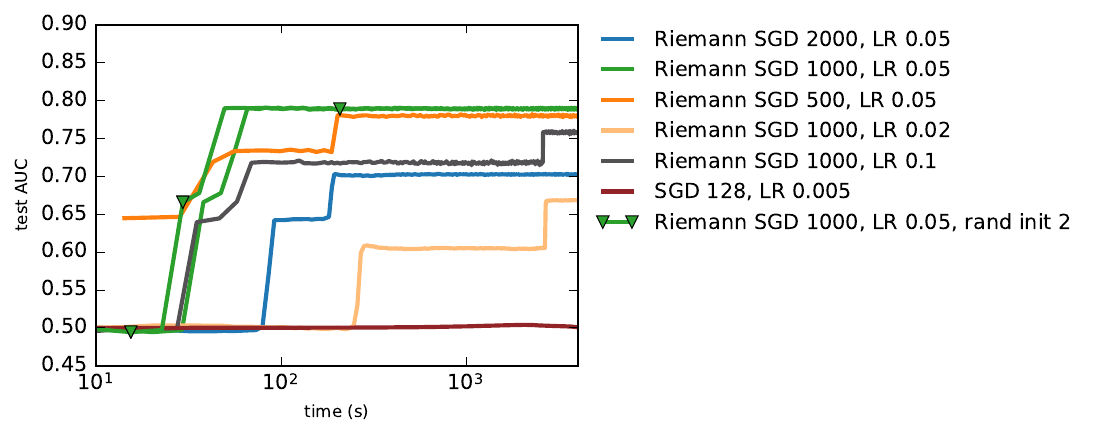}
  }
\caption{A comparison between Riemannian optimization and SGD applied to the underlying parameters of the TT-format (the baseline) for the rank-$3$ Exponential Machines on the synthetic dataset with high order interactions.
The first number in each legend enrty stands for the batch size.
The method marked with `rand init' in the legend (triangle markers) was initialized from a random linear model, all other methods were initialized from the solution of ordinary linear logistic regression.
See details in Sec.~\ref{sec:exp-riemannian-optimization} and~\ref{sec:exp-initialization} \label{fig:riemannian-vs-plain-synthetic}}
\end{figure}
\begin{proof}[Proof of Theorem \ref{thm:initialization-rank}]
The product of all TT-cores can be represented as a product of the first $p = d-1$ cores times the last core $G_d[i_d]$ and by using Lemma~\ref{lemma:initialization-rank} we get
\begin{equation*}
\tensel{W}_{i_1 \ldots i_d} = G_1[i_1] \ldots G_{d-1}[i_{d-1}] G_{d}[i_{d}]
	=~~
	\begin{cases*}
	    \begin{array}{ll}
	    b, & \text{if } \sum_{q=1}^d i_q = 0,\\[0.1cm]
	    0, & \text{if } \sum_{q=1}^d i_q \geq 2,\\[0.1cm]
	    w_k, & \text{if } \sum_{q=1}^d i_q = 1,\\
	                        &~~~~~\,\text{and}~i_k = 1.
	\end{array}
	\end{cases*}
\end{equation*}
The elements of the obtained tensor $\tens{W}$ that correspond to interactions of order $\geq 2$ equal to zero; the weight that corresponds to $x_k$ equals to $w_k$; and the bias term $\tensel{W}_{0 \ldots 0} = b$.

The TT-rank of the obtained tensor equal $2$ since its TT-cores are of size $2 \times 2$.
\end{proof}

\section{Detailed description of the datasets \label{sec:app-datasets}}
We used the following datasets for the experimental evaluation
\begin{enumerate}
  \item \textbf{UCI~\citep{Lichman2013UCI} Car dataset} is a classification problem with $1728$ objects and $21$ binary features (after one-hot encoding).
  We randomly splitted the data into $1382$ training and $346$ test objects.
  For simplicity, we binarized the labels: we picked the first class (`unacc') and made a one-versus-rest binary classification problem from the original Car dataset.

  \item \textbf{UCI~\citep{Lichman2013UCI} HIV dataset} is a binary classification problem with $1625$ objects and $160$ features, which we randomly splitted into $1300$ training and $325$ test objects.

  \item \textbf{Synthetic data.} We generated $100\,000$ train and $100\,000$ test objects with $30$ features.
  Each entry of the data matrix $X$ was independently sampled from $\{-1, +1\}$ with equal probabilities $0.5$.
  We also uniformly sampled $20$ subsets of features (interactions) of order $6$: $j^1_1, \ldots, j^1_{6}, \ldots, j^{20}_1, \ldots, j^{20}_{6} \sim \mathcal{U}\{1, \ldots, 30\}$.
  We set the ground truth target variable to a deterministic function of the input: $y(\vec{x}) = \sum_{z=1}^{20} \varepsilon_z \prod_{h=1}^{6} x_{j^z_h}$, and sampled the weights of the interactions from the uniform distribution: $\varepsilon_1, \ldots, \varepsilon_{20} \sim \mathcal{U}(-1, 1)$.

  \item \textbf{MovieLens 100K.} MovieLens 100K is a recommender system dataset with $943$ users and $1682$ movies~\citep{harper2015movielens}.
  We followed~\cite{blondel2016hofm} in preparing the features and in turning the problem into binary classification.
  For users, we converted age (rounded to decades), living area (the first digit of the zipcode), gender and occupation into a binary indicator vector using one-hot encoding.
  For movies, we used the release year (rounded to decades) and genres, also encoded.
  This process yielded $49 + 29 = 78$ additional one-hot features for each user-movie pair ($943 + 1682 + 78$ features in total).
  Original ratings were binarized using $5$ as a threshold.
  This results in $21200$ positive samples, half of which were used for traininig (with equal amount of sampled negative examples) and the rest were used for testing.
\end{enumerate}

\end{document}